\documentclass[letterpaper, 10 pt, conference]{ieeeconf}
\IEEEoverridecommandlockouts    
\usepackage{times}

\usepackage{multicol}
\usepackage[bookmarks=true,colorlinks=false,hidelinks]{hyperref}
\usepackage{soul}
\usepackage{amsfonts}       
\usepackage{amsmath}
\usepackage{amssymb}
\usepackage{graphicx}
\usepackage{epstopdf}
\usepackage{mathtools}
\usepackage{dsfont}
\usepackage{tikz}
\usepackage{siunitx}
\usepackage{hyperref}
\usepackage[font=footnotesize]{caption}
\usepackage[noabbrev,capitalize]{cleveref}
\usepackage{balance}    
\usepackage{xcolor}
\usepackage{tcolorbox}
\usepackage{algorithm}
\usepackage{algpseudocode}
\usepackage{mathtools}
\usepackage{dsfont}
\usepackage{mathrsfs}
\usepackage{stmaryrd}
\usepackage{placeins}
\usepackage{upgreek}
\usepackage{wrapfig}
\usepackage{bm}
\usepackage{cases}
\usepackage{bbm}
\usepackage{noel}
\usepackage{subfloat}


\crefname{equation}{}{}
\title{\LARGE \bf Robust Agility via Learned Zero Dynamics Policies}




\author{Noel Csomay-Shanklin$^{1*}$, William D. Compton$^{1*}$, Ivan Dario Jimenez Rodriguez$^{1*}$, \\Eric R. Ambrose$^{2}$, Yisong Yue$^{1}$, Aaron D. Ames$^{1}$%
\thanks{$^*$denotes equal contribution. $^1$Authors are with the Department of Computing and Mathematical Sciences, California Institute of Technology,
Pasadena, CA 91125. $^2$Authors are with NASA Jet Propulsion Laboratory.}%
\thanks{This research was supported by Technology Innovation Institute (TII), NSF Graduate Research Fellowship No. DGE‐1745301, AeroVironment, NSF Grant No. 1918655 and Raytheon, Beyond Limits, JPL RTD 1643049.}
}


\begin{document}

\maketitle

\thispagestyle{empty}
\pagestyle{empty}

\begin{abstract}
    We study the design of robust and agile controllers for hybrid underactuated systems.
    Our approach breaks down the task of creating a stabilizing controller into: 1) learning a mapping that is invariant under optimal control, and 2) driving the actuated coordinates to the output of that mapping.
    This approach, termed Zero Dynamics Policies, exploits the structure of underactuation by restricting the inputs of the target mapping to the subset of degrees of freedom that cannot be directly actuated, thereby achieving significant dimension reduction.
    Furthermore, we retain the stability and constraint satisfaction of optimal control while reducing the online computational overhead.
    We prove that controllers of this type stabilize hybrid underactuated systems and experimentally validate our approach on the 3D hopping platform, ARCHER. 
    Over the course of 3000 hops the proposed framework demonstrates robust agility, maintaining stable hopping while rejecting disturbances on rough terrain.
\end{abstract}
\IEEEpeerreviewmaketitle



\section{Introduction}
The underactuated dynamics inherent to legged locomotion, swimming, and dexterous manipulation impose fundamental limits on controller performance and necessitate a critical understanding of the system's flow to achieve complex behaviors. 
Underactuation prevents arbitrarily shaping a system's dynamics, undermining the assumptions of many control-theoretic methods such as feedback linearization \cite{sastry_linearization_1999} and offline trajectory tracking. 
This work leverages recent advances in controller design for underactuated systems \cite{pmlr-v168-rodriguez22a, compton2024constructivenonlinearcontrolunderactuated}, optimal control \cite{liberzon_calculus_2012}, and their integration with computational learning methods to design feedback strategies that exploit the structure of underactuation, enabling the agile and robust behavior shown in \cref{fig:experiments}.


\begin{figure}[t!]
    \centering
    \href{https://vimeo.com/923800815?share=copy}{
    \includegraphics[width=0.985\columnwidth]{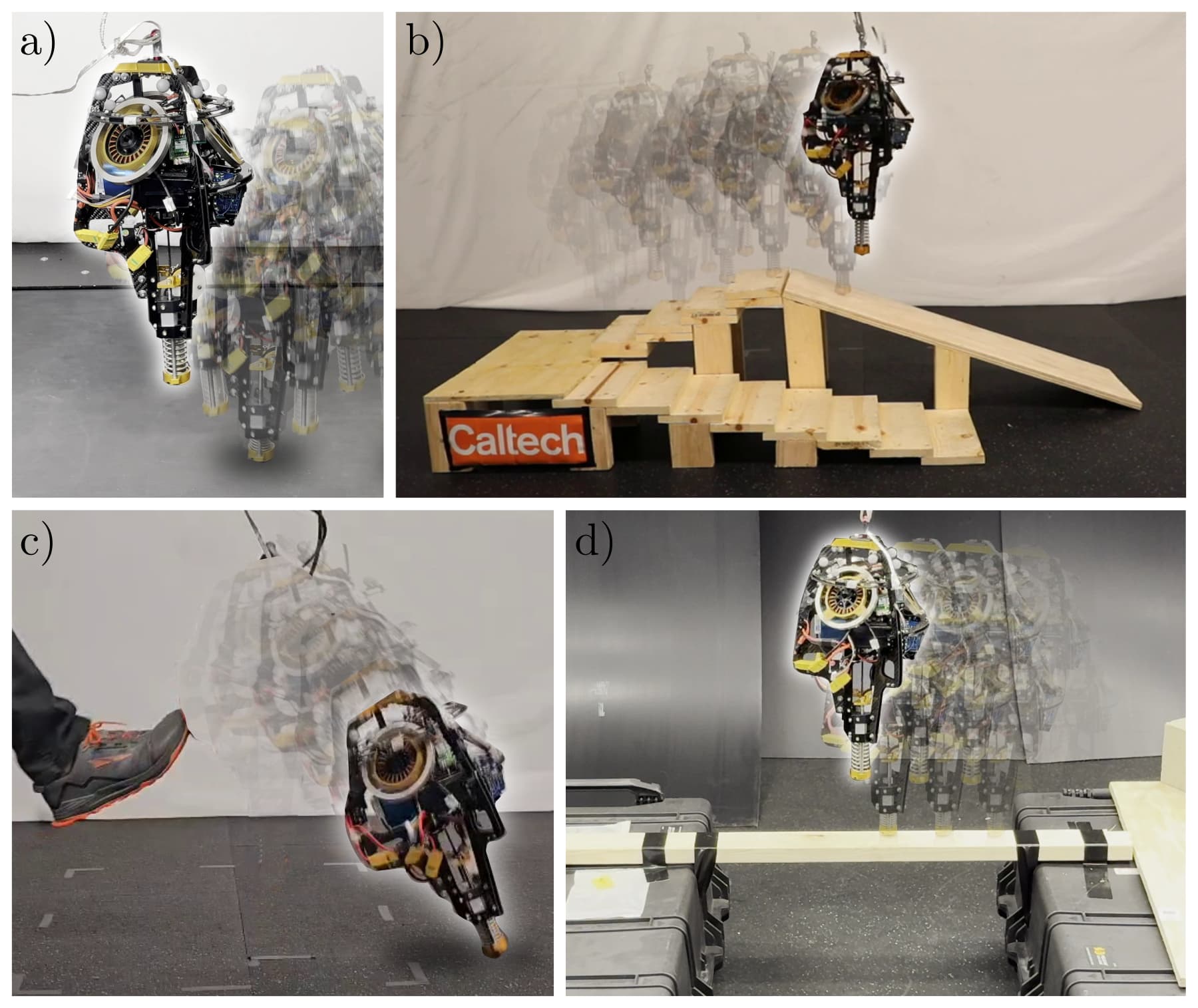}
    }
    \vspace{-5mm}
    \caption{Experiments run with Zero Dynamics Policies: a) treadmill hopping with disturbances up to 1 mile per hour, b) 1.5" stair climbing and 20° ramp descending, c) disturbance rejection, and d) hopping across a 2x4.}
    \vspace{-5.4mm}
    \label{fig:experiments}
\end{figure}

A predominant method for controlling underactuated systems is Model Predictive Control (MPC) \cite{borrelli2017predictive,mayne2000constrained}, which leverages concepts from optimal control over a prediction horizon to achieve stabilization \cite{wensing2022optimizationbasedcontroldynamiclegged}. Performance of MPC controllers improves with longer horizons and finer time discretizations, both of which conflict with its strict real-time computational requirements.
To address the high computational cost of full-model optimization problems, some methods leverage a gradation of model fidelities along a time horizon \cite{khazoom2024humanoid, li2024cafempccascadedfidelitymodelpredictive}. Other methods rely on offline trajectory optimization to generate desirable behaviors, and then track these behaviors online \cite{westervelt_hybrid_2003}. For underactuated systems, the online tracking problem can be non-trivial, often requiring additional feedback mechanisms to stabilize the underactuated states such as regulators \cite{reher2021dynamic}. 
%

Reinforcement learning (RL) \cite{Schulmanetal_ICLR2016} takes the concept of offline computation even further, using concepts from stochastic optimal control and parallelized simulation environments to synthesize feedback controllers. RL methods have shown robust performance \cite{miki2022learning, li2024reinforcementlearningversatiledynamic} when the policy is trained in sufficiently randomized domains. 
Current methods in RL improve policies through simulator rollouts \cite{suh2022differentiable}, typically at the expense of high data complexity. 
%
%
Although these can work well, they exhibit extreme sensitivity to cost function parameters and ignore the underlying system structure.

Heuristics, on the other hand, are able to leverage intuition about system structure, and can achieve stabilization with minimal online or offline computational overhead. In the context of legged locomotion, the Raibert Heuristic for hopping \cite{raibert_experiments_1984}, inverted pendulum models for walking \cite{kajita20013d}, and spring-loaded pendulums for running \cite{han20223d} all reason about where a legged robot's feet should be placed in order to stabilize the center of mass. While these methods may be less formal than the methods above and require significant domain expertise to implement, they tend to reason (perhaps implicitly) about the fundamental control structure needed to address the underactuation. 

The above methods generally intersect in two places: first, an application of feedback to the actuated states based on the position of underactuated states (either explicitly or through replanning), and second, a dependence on optimality to generate stable, desirable behaviors. We propose a method which combines these two ideas, using optimality to ensure stability while reasoning explicitly about the structure of underactuation. Specifically, we leverage the notion of \textit{zero dynamics} to explicitly decompose the system into actuated and unactuated coordinates \cite{isidori_elementary_1995, westervelt2003hybrid, reher2021control, da2017combiningtrajectoryoptimizationsupervised}. We pair this paradigm with optimal control to learn a mapping from the unactuated state to a desired actuated state, termed a Zero Dynamics Policy (ZDP), which is then stabilized using a tracking controller. 
This perspective aligns with prior work on Hybrid Zero Dynamics (HZD) \cite{westervelt2003hybrid}; however, rather than assuming stability of the zero dynamics manifold or relying on phasing variables and periodicity, we use optimal control to provably and constructively synthesize stable output-zeroing manifolds.


We propose a general framework for the control of hybrid underactuated systems and apply it to hopping, which exemplifies the challenges of such systems due to the large number of passive degrees of freedom, tight input constraints, and short ground phases.
Our empirical validation of ZDPs on the ARCHER 3D hopping robot showcases an agile and stable controller as seen in Figure~\ref{fig:experiments} and the supplemental video \cite{noauthor_supplemental_video_2022}. Over the course of more than 3000 hops, our method achieves state of the art disturbance rejection, hops over long distances on a treadmill, navigates an obstacle course and rough terrain without vision, and is precise enough to reliably hop across narrow bridges.

\section{Preliminaries}
\subsection{Hybrid Dynamics and Lyapunov Stability}
Consider an $n-$degree of freedom robotic system with coordinates $\b q\in\mathcal{Q}$ and state $\b x = (\b q, \dot{\b q})\in \mathcal{X} \triangleq \mathsf{T}\mathcal{Q}.$ Using the Euler Lagrange equations, we write the continuous-time dynamics in control-affine form as:
\begin{align}
    \dot{\b x} = \underbrace{\begin{bmatrix} \dot {\b q} \\ -\b D(\b q)^{-1} \b H(\b q, \dot{\b q}) \end{bmatrix}}_{\b f(\b x)} + \underbrace{\begin{bmatrix} \b 0 \\ \b D(\b q)^{-1} \b B\end{bmatrix}}_{\b g(\b x)}\b u \label{eq:robot_dynamics}
\end{align}
where $\b D : \mathcal{Q} \to \R^{n\times n}$ is the positive-definite mass-inertia matrix, $\b H:\mathcal{X} \to\R^n$ contains the Coriolis and gravity terms, $\b B\in\R^{n\times m}$ is the selection matrix, and $\b u\in\R^m$ is the control input. For the following discussion we assume that $\b B$ has (column) rank $m < n$, i.e. \eqref{eq:robot_dynamics} is underactuated.

As the robot experiences impulsive effects, it is subject to the instantaneous momentum transfer equation:
\begin{align}
    \b {x^+} = \b \Delta(\b x^-), \label{eqn:disc}
\end{align}
with $\b \Delta:\mathcal{X}\to\mathcal{X}$ representing the impact map. Combining \eqref{eq:robot_dynamics} and \eqref{eqn:disc}, the complete hybrid dynamics can be written as:
\begin{align*}
    \mathscr{H} = \begin{cases}
        \dot{\b x} = \b f(\b x) + \b g(\b x) \b u &~~~ \b x \notin \mathcal{S}\\
        \b x^+ = \b\Delta(\b x^-) &~~~ \b x^- \in \mathcal{S}
    \end{cases}
\end{align*}
where $\mathcal{S}\subset\mathcal{X}$ is an appropriately defined switching surface, for example the foot making or breaking contact with the ground \cite{westervelt_hybrid_2003}.



Towards developing a stabilizing feedback controller for \eqref{eq:robot_dynamics}, define a collection of continuous time outputs ${\b y:\mathcal{X} \to \R^m}$ that we would like to drive to zero. For outputs of relative degree two \cite{sastry_linearization_1999}, consider the error coordinates $\b e = (\b y, \dot{\b y})\in \mathcal{E}\subseteq\R^{2m}$. These errors can be constructively stabilized via a RES-CLF, defined as:
\begin{definition} \cite{ames2017hybrid}
    For the system \eqref{eq:robot_dynamics}, $V_\varepsilon: \mathcal{E} \rightarrow \R$ is said to be a \textit{rapidly exponentially stabilizing control Lyapunov function (RES-CLF)} if there exists a $\lambda,k_1,k_2 > 0$, such that for all $\varepsilon \in (0, 1)$:
    \begin{align}
        k_1 \|\b e\|^2 \leq V_\varepsilon(\b e) &\leq k_2 \|\b e\|^2 \nonumber \\ 
        \inf_{\b u} \dot V_\varepsilon(\b x, \b u)&\leq -\frac{\lambda}{\varepsilon} V_\varepsilon(\b e).\label{eq:Lyap_stability_cont}
    \end{align}
\end{definition}

Valid relative degree ensures the existence of a nonempty set $\mathcal{K}$, defined to be the set of all controllers satisfying the inequality \eqref{eq:Lyap_stability_cont}. 
Any controller $\b k \in \mathcal{K}$ renders the continuous time output exponentially stable, i.e. there exists $M, \Tilde{\lambda} > 0$ such that:
\begin{align*}
    \|\b e(t)\| \leq Me^{-\frac{\Tilde{\lambda}}{\varepsilon}t}\|\b e(0)\|,
\end{align*}
whereby tuning $\varepsilon$ down enables arbitrarily fast convergence. 

\subsection{From Hybrid Dynamics to Discrete-Time Dynamics}

We will be interested in modeling $\mathscr{H}$ as a discrete-time dynamical system via its impact-to-impact dynamics. To this end, let $\b x_k\in\mathcal{X}$ denote the robot state just before impact, $\paramspace$ denote an admissible parameter set for $\inputk\in\paramspace$, a discrete parameterization of the control input over a single continuous phase, and $t_k\in\R_{\ge 0}$ be the duration of the continuous phase. We reformulate our hybrid control system into discrete dynamics via:
\vspace{-2mm}
\begin{align}
    \b x_{k+1} = \b F(\b x_k, \inputk), \label{eq:disc_dyn}
\end{align}
where $\b F:\mathcal{X}\times \paramspace \to \mathcal{X}$ composes the the impact map \eqref{eqn:disc} with the flow of \eqref{eq:robot_dynamics} under a parameterized feedback controller ${\b u = \b k(\b x(t), \inputk) \in \mathcal{K}}$.
In the context of hopping, we take $\inputk$ to be the desired impact angle. This parameterization of control input allows us to reason about the effect of impact conditions on the resulting system dynamics, which are the primary means of stabilizing legged systems.
%
%
 %
 Note that here we assume the existence of a lower bound between impact times so that $\b F$ is well defined.
 For a complete discussion of how to achieve this representation from the underlying hybrid dynamics, see  \cite{da2017combiningtrajectoryoptimizationsupervised}. Similar to the continuous-time case, the stability of the discrete time error dynamics can be reasoned about via Lyapunov theory:

 \begin{definition}\label{def:disc_lyap}
    For the system $\b e_{k+1} = \b F(\b e_k)$, $V:\mathcal{E}\to \R$ is a \textit{discrete exponential Lyapunov function} if it is positive definite and there exists an $\alpha \in (0, 1], \ k_1, k_2 > 0$ such that:
    \begin{align*}
        k_1 \|\b e_k\|^2 \leq V(\b e_k) &\leq k_2 \|\b e_k\|^2 \nonumber \\
        \Delta V(\b e) = V(\b e_{k+1}) - V(\b e_k) &\le -\alpha V(\b e_k).
    \end{align*}
\end{definition}

The existence of such a Lyapunov function is necessary and sufficient for exponential stability of a system, i.e. the existence of $M>0, \ \beta \in [0,1)$ such that:
\begin{align*}
    \|\b e_{k} \| \leq M \beta^k \| \b e_0\|.
\end{align*}


\subsection{Discrete-Time Optimal Control}
We leverage optimal control to synthesize inputs $\inputk$ which stabilize the discrete time system in \eqref{eqn:disc} while satisfying input constraints. To this end, consider the following infinite-time optimal control problem:
\begin{align}
    V(\b x_0) \triangleq \min_{\substack{\b x_k, \inputk}}\quad & \sum_{k=0}^\infty c(\b x_k, \inputk) \label{eqn:ftocp}\\
\textrm{s.t.} \quad & {\b x}_{k+1} = {\b F}(\b x_k, \inputk) \notag\\
    \quad & \b h(\b x_k, \inputk)\le 0\notag
\end{align}
where $V: \mathcal{X} \rightarrow \R$ is termed the value function, $c:\mathcal{X} \times \paramspace \to \R$ is a positive-definite cost function and $\b h:\mathcal{X}\times\paramspace\to\R^p$ contains any state-input constraints. With this, we can define the state-action value function ${Q:\mathcal{X} \times \paramspace \to \R}$ as:
\begin{align*}
    Q(\b x_k, \inputk) = c(\b x_k, \inputk) + V(\b x_{k+1}),
\end{align*}
which defines the optimal control input at any state $\b x_k$ through following optimization program:
\begin{align}
    \inputk^*(\b x_k) = \text{arg}\min_{\inputk}~& Q(\b x_k, \inputk) \label{eqn:q_func}\\ 
    \text{s.t.} ~~& \b h(\inputk, \b x_k) \le \b 0 \notag
\end{align}
We rely on iteratively solving convex approximations of this nonconvex problem via iLQR. In Section~\ref{sec:method} we show that tracking the output of optimal controllers in continuous time results in exponential stability of the discrete time dynamics.


\subsection{Outputs and Zero Dynamics}
Understanding the structure of underactuation provides key insight into constructing stabilizing controllers for these systems.
To analyze the states that actuation directly impacts, consider the following coordinate change:
\begin{align}
    \b \eta = \b\Phi_{\b\eta}(\b x) \triangleq \begin{bmatrix}\b B^\top \b q\\ \b B^\top \dot{\b  q}\end{bmatrix},~~~\b z = \b\Phi_\b z(\b x) \triangleq\begin{bmatrix} \b N \b q \\ \b N \b D(\b q) \dot{\b q}\end{bmatrix} \label{eqn:outputs}
\end{align}
for $\b\eta \in \outputSpace\subset\mathcal{X}$ and $\b z \in \zeroSpace\subset\mathcal{X}$, where $\b N\in\R^{(n-m)\times n}$ is chosen to be a basis for the left nullspace of $\b B$. It is easily verified that the coordinate change ${\b \Phi(\b x) \triangleq (\b \Phi_{\b \eta}(\b x), \b \Phi_{\b z}(\b x))}$ is a diffeomorphism between $\mathcal{X}$ and $\outputSpace\times \zeroSpace$; therefore, $\b\Phi^{-1}$ exists and any conclusions of stability of $(\b\eta, \b z)$ are directly transferable back to $\b x$.
In these coordinates, the hybrid dynamics are given by:
\begin{align*}
    &\dot{\b \eta}= \hat{\b f}(\b \eta, \b z) +\hat{\b g}(\b \eta, \b z) \b u, \ \ \ \ 
    \dot{\b z} = \b \omega(\b \eta, \b z), \ \ \  \b\Phi^{-1}(\b \eta, \b z)\notin \mathcal{S} \\
    &\b\eta^+ = \b\Delta_{\b\eta}(\b\eta^-, \b z^-), \ \ \
    \b z^+ = \b\Delta_{\b z}(\b\eta^-, \b z^-), \ \ \b\Phi^{-1}(\b \eta, \b z)\in \mathcal{S}
\end{align*}
termed the \textit{actuated} dynamics and the \textit{unactuated} dynamics, respectively.
Note that these coordinates were exactly chosen such that $\hat{\b g}(\b \eta, \b z)$ is full rank and $\frac{d \b z}{d \b x} \b g(\b x) \equiv \b 0$; as such, this mapping decomposes the state space into coordinates which can directly be controlled, and those which cannot.


Assuming the continuous time input does not effect the impact map or impact time\footnote{This assumption is needed so that $\b \Omega$ is not a function of $\b v_k$ and is well justified on ARCHER as impact angle weakly effects impact time.}, applying $\b \Phi$ to the discrete dynamics \eqref{eq:disc_dyn} results in:
\begin{align}
    \b \eta_{k+1} = \hat{\b F}(\b \eta_k, \b z_k, \inputk),\ \ \
    \b z_{k+1} = \b \Omega(\b \eta_k, \b z_k). \label{eqn:disc_normal_form}
\end{align}

Now, consider a mapping $\b \psit : \zeroSpace\to \outputSpace$ and associated discrete-time error $\b e_k = \b \eta_k - \b \psit(\b z_k)$. The goal will be to design $\b\psit$ such that driving $\b e_k$ to zero results in stability of the overall system. This choice of error parameterization is inspired by other successful results in robotics; the Raibert Heuristic \cite{raibert_experiments_1984}, reduced order models \cite{han20223d}, and regulators for HZD gaits \cite{reher2021control} all reason about where to place a robot's feet (the actuated state) as a function of their center of mass state (the underactuated state). 
We aim to generalize these methods and reason explicitly about constructive methods to generate provably stable behaviors. 
The construction of the mapping $\b \psit$ induces an associated manifold $\mathcal{M}_{\b\psi}\subset\mathcal{X}$ via:
\begin{align}
    \mathcal{M}_{\b\psi} \triangleq \{(\b \eta_k, \b z_k)~|~ \b \eta_k = \b \psit(\b z_k)\}. \label{eqn:zeroing_manifold}
\end{align}
We will be interested in enforcing conditions such that $\mathcal{M}_{\b\psi}$ is controlled invariant, defined as:
\begin{definition} \label{def:inv_manifold}
    The manifold $\mathcal{M}_{\b\psi}$ is \textit{controlled invariant} if for all $(\b\eta_k, \b z_k)\in \mathcal{M}_{\b\psi}$ there exists a $\inputk \in \paramspace$ such that the next state remains on the manifold, i.e.:
    \vspace{-1mm}
    \begin{align*}
        \Big(\b F(\b\eta_k, \b z_k, \inputk),~ \b \Omega(\b \eta_k, \b z_k)\Big) \in\mathcal{M}_{\b\psi}.
    \end{align*}
    \vspace{-3mm}
\end{definition}
\vspace{-3mm}
Assuming a controlled invariant manifold $\mathcal{M}_{\b\psi}$, we now have the notion of discrete-time zero dynamics:
\begin{definition}
    The \textit{discrete-time zero dynamics} associated with a controlled invariant manifold $\mathcal{M}_{\b \psi}$ are given by:
    \begin{align*}
        \b z_{k+1} = \b \Omega(\b \psit(\b z_k), \b z_k).
    \end{align*}
\end{definition}

These dynamics are autonomous but determined by choice of $\b\psit$; therefore, the goal of this work will be to design $\b\psit$ such that the zero dynamics are stable. We show that stability on $\mathcal{M}_{\b\psi}$ paired with a suitably defined output controller results in stability of the overall system.

\section{Discrete-Time Zero Dynamics Policies}
\label{sec:method}

We propose a discrete-time mapping from the underactuated state, $\b z_k$, to a desired actuated state, $\b \eta_k$. This mapping, $\b \psit: \zeroSpace \rightarrow \outputSpace$, will encode the desired position of the actuated coordinates given the location of the unactuated coordinates at impact.
The job of the continuous time controller is to drive $\b \eta(t)$ to the desired preimpact location, $\b \psit(\b z_{k+1})$.

In this section, we will first reason about the ability of continuous time controllers to render $\mathcal{M}_{\b\psi}$ attractive and invariant by driving the error $\b e$ to zero. Second, we demonstrate that if the manifold has stable zero dynamics (trajectories on the manifold converge to the origin), then stabilizing the manifold stabilizes the entire system. Finally, we propose a learning pipeline which leverages optimal control to find a manifold with the desired properties. 

\subsection{Constructive Stabilization of the Zeroing Manifold}
We show that the structure of the proposed manifold allows constructive stabilization techniques:
\begin{lemma}\label{thm:stabilize_manifold}
\textit{    Consider a controlled invariant manifold $\mathcal{M}_{\b\psi}$. There exists a continuous-time control law $\b k\in\mathcal{K}$ which results in exponential stabilization of $\|\b \eta_k -\b \psit(\b z_k)\|$.}
\end{lemma}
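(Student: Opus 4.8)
The plan is to reduce this discrete-time claim to the continuous-time RES-CLF machinery, using two structural facts. First, the unactuated update $\b z_{k+1} = \b\Omega(\b\eta_k,\b z_k)$ in \eqref{eqn:disc_normal_form} is input-independent, so the desired preimpact configuration $\b\psit(\b z_{k+1})$ is fixed the instant impact $k$ occurs and can be treated as a constant reference over the $k$-th continuous phase. Second, controlled invariance of $\mathcal{M}_{\b\psi}$ (\cref{def:inv_manifold}) makes $\{\b e_k = 0\}$ consistent with \eqref{eqn:disc_normal_form} under a suitable input choice, so it is a genuine fixed point of the error recursion rather than merely a target of convenience. Granting these, the discrete contraction is supplied by the arbitrarily fast continuous-time convergence the RES-CLF affords as $\varepsilon\downarrow 0$, together with the standing lower bound $T_{\min}>0$ on the phase durations $t_k$ that makes $\b F$ well defined.

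Concretely, fix a phase $k$. Compute $\b z_{k+1} = \b\Omega(\b\eta_k,\b z_k)$ at the post-impact time $t_k^+$, and let $\inputk\in\paramspace$ be the parameter that, by \cref{def:inv_manifold} applied on $\mathcal{M}_{\b\psi}$, steers the corresponding on-manifold post-impact state to $\b\psit(\b z_{k+1})$; the associated closed-loop arc in the actuated coordinates furnishes a relative-degree-two output $\b y(t)$ — the deviation of $\b B^\top\b q(t)$ from the directly-actuated component of that arc — with error coordinate $\b e(t) = (\b y(t),\dot{\b y}(t))$, designed so that $\b e(t_{k+1}^-)=0$ forces $\b\eta_{k+1} = \b\psit(\b z_{k+1})$ while $\b e(t_k^+)=0$ whenever the step-$k$ preimpact state lies on $\mathcal{M}_{\b\psi}$. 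Since $\hat{\b g}(\b\eta,\b z)$ is full rank the output has valid relative degree, so $\mathcal{K}\neq\emptyset$; picking any $\b k\in\mathcal{K}$ and invoking the definition of a RES-CLF gives $\|\b e(t)\|\le Me^{-\tilde{\lambda}(t-t_k^+)/\varepsilon}\|\b e(t_k^+)\|$ over the phase, whence $\|\b e(t_{k+1}^-)\|\le\gamma(\varepsilon)\|\b e(t_k^+)\|$ with $\gamma(\varepsilon)\triangleq Me^{-\tilde{\lambda}T_{\min}/\varepsilon}\to 0$ as $\varepsilon\downarrow 0$.

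It remains to tie the continuous error back to $\b e_k = \b\eta_k - \b\psit(\b z_k)$. The assignment $\b e_k\mapsto\b e(t_k^+)$ is a composition of $\b\Delta$, $\b\Omega$, $\b\psit$ and the reference construction, hence locally Lipschitz, and by the design above it maps $0$ to $0$; so on a compact neighborhood of $\mathcal{M}_{\b\psi}$ there is $L_1>0$ with $\|\b e(t_k^+)\|\le L_1\|\b e_k\|$, and likewise $\|\b e_{k+1}\| = \|\b\eta_{k+1} - \b\psit(\b z_{k+1})\|\le L_2\|\b e(t_{k+1}^-)\|$ for some $L_2>0$. Composing the three estimates, $\|\b e_{k+1}\|\le L_1L_2\gamma(\varepsilon)\|\b e_k\|$; choosing $\varepsilon$ small enough that $\beta\triangleq L_1L_2\gamma(\varepsilon)<1$ and taking $V(\b e_k)=\|\b e_k\|^2$ verifies \cref{def:disc_lyap}, so $\|\b e_k\|\le M'\beta^k\|\b e_0\|$ and $\|\b\eta_k-\b\psit(\b z_k)\|$ is exponentially stabilized by this $\b k\in\mathcal{K}$.

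The main obstacle is this middle construction together with the uniformity of constants: one must pick the $\inputk$-parameterized output so that it has uniform relative degree along the trajectories in play, so that its value at $t_{k+1}^-$ genuinely measures $\b\eta_{k+1}-\b\psit(\b z_{k+1})$ and its value at $t_k^+$ vanishes exactly on $\mathcal{M}_{\b\psi}$ — so that no ``retargeting bias'' term survives that is not proportional to $\|\b e_k\|$ and damped through $\gamma(\varepsilon)$ — and so that $L_1,L_2,M,\tilde{\lambda}$ may be taken uniform over a compact neighborhood of the manifold. The assumptions that the continuous input affects neither the impact map nor the impact time, and that consecutive impacts are separated by at least $T_{\min}>0$, are precisely what make $\b z_{k+1}$ (hence the target) available at $t_k^+$ and keep $\gamma(\varepsilon)$ from degrading; everything else is the continuity and compactness argument sketched above.
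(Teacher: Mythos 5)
Your proposal is correct and follows essentially the same route as the paper: construct a reference arc from the post-impact image of the on-manifold point to the next on-manifold target $\b \psit(\b z_{k+1})$, track it with a RES-CLF over the guaranteed minimum dwell time, bound the initial tracking error by the Lipschitz constant of the impact map applied to $\|\b \eta_k - \b \psit(\b z_k)\|$, and shrink $\varepsilon$ until the per-step contraction factor drops below one. The only cosmetic difference is that you carry an extra Lipschitz constant $L_2$ relating the continuous-time error at $t_{k+1}^-$ to $\b e_{k+1}$, whereas the paper's reference trajectory is built so that this identification is exact.
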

\begin{proof}
    Consider a point $(\b \eta_k, \b z_k)$ and the evaluation of the current and next states on the manifold: $\b \psit(\b z_k)$ and $\b \psit(\b z_{k+1})$, respectively.
    As the $\b \eta(t)$ dynamics are feedback linearizable, there exists a dynamically feasible trajectory $\b \eta_d(t)$ such that $\b \eta_d(0) =  (\b \psit(\b z_k))^+$, and $\b \eta_d(t_k) = \b \psit(\b z_{k+1})$, where $t_k$ is the impact time and $(\cdot)^+$ denotes a postimpact state. For example, $\b \eta_d(t)$ can be constructed using Bezier polynomials \cite{csomay2022multi}. Using a controller $\b k\in\mathcal{K}$, i.e. satisfying the RES-CLF condition \eqref{eq:Lyap_stability_cont}, we can obtain exponential convergence to this trajectory in continuous time:
    \begin{align*}
        \|\b \eta(t) - \b \eta_d(t)\| \leq Me^{-\frac{\lambda}{\varepsilon}t}\|\b \eta_k^+ - (\b \psit(\b z_{k}))^+\|,
    \end{align*}
    for $M, \lambda > 0$. Taking $T_*>0$ to be the lower bound between impact times, the impact states are uniformly bounded by:
    \begin{align*}
        \|\b \eta_{k+1} - \b \psit(\b z_{k+1})\| \leq Me^{-\frac{\lambda}{\varepsilon}T_*}\|\b \eta_k^+ - (\b \psit(\b z_{k}))^+\|.
    \end{align*}
    Then, using the properties of the impact map we have:
    \begin{align*}
        \|\b \eta_k^+ - (\b \psit(\b z_{k}))^+\| &= \| \b \Delta_{\b \eta}(\b \eta_k, \b z_k) - \b \Delta_{\b \eta}(\b \psit(\b z_k), \b z_k) \| \\
        &\leq L_{\Delta} \|\b \eta_k - \b \psit(\b z_k)\|,
    \end{align*}
    substituting into the bound above, and choosing $\varepsilon > 0$ sufficiently small that $\alpha = ML_{\Delta}e^{-\frac{\lambda}{\varepsilon}T_*} \in (0, 1]$, we have:
    \begin{align*}
         \|\b \eta_{k+1} - \b \psit(\b z_{k+1})\| \leq \alpha \|\b \eta_k - \b \psit(\b z_k) \|,
    \end{align*}
    proving exponential stability to the manifold, as desired.     
    \qed
\end{proof}
\begin{remark}
    The desired trajectory $\b \eta_d(t)$ is being implicitly replanned at impact via $\b \psit$ as a function of the underactuated state $\b z_{k}$. Additionally, the manifold $\mathcal{M}_{\b\psi}$ is invariant under the discrete dynamics $\b F$, but is notably not hybrid invariant.
\end{remark}
\subsection{Composite Stability}
The previous section demonstrated a method for constructing a controller to exponentially stabilize the system to a controlled invariant manifold $\mathcal{M}_{\b\psi}$. We now show that exponentially stabilizing the system to a manifold with stable zero dynamics results in composite exponential stability of the entire system:
%
%
\begin{theorem} \label{thm:composite_stability}
    \textit{Consider a controlled invariant manifold $\mathcal{M}_{\b\psi}$ whose zero dynamics are exponentially stable. Any control law exponentially stabilizing $\b \|\b \eta_k - \b \psit(\b z_k)\|$ stabilizes the discrete-time composite system $(\b \eta_k, \b z_k)$ to the origin. }
\end{theorem}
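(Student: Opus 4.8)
The plan is to treat $(\b\eta_k,\b z_k)$ --- equivalently $(\b e_k,\b z_k)$ with $\b e_k=\b\eta_k-\b\psit(\b z_k)$ --- as a discrete-time cascade: the $\b e$-subsystem is exponentially stable by hypothesis, and it drives the $\b z$-subsystem, whose unforced version (the zero dynamics associated with $\mathcal{M}_{\b\psi}$) is exponentially stable. The conclusion then follows from the standard fact that cascades of exponentially stable systems with a locally Lipschitz interconnection are exponentially stable, which I would make constructive by exhibiting a weighted composite Lyapunov function and checking it against \cref{def:disc_lyap}. Throughout I would work on a forward-invariant compact sublevel set (see the last paragraph) and assume $\b\Omega$ and $\b\psit$ are locally Lipschitz with $\b\psit(\b 0)=\b 0$, which holds for the smooth robot models considered.

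First I would assemble the two ingredient Lyapunov functions. Since the manifold error is exponentially stable under the given controller, the converse Lyapunov statement accompanying \cref{def:disc_lyap} provides $W$ with $\kappa_1\|\b e\|^2\le W(\b e)\le\kappa_2\|\b e\|^2$ and $W(\b e_{k+1})-W(\b e_k)\le-\alpha_e W(\b e_k)$. Since the zero dynamics $\b z_{k+1}=\b\Omega(\b\psit(\b z_k),\b z_k)$ are exponentially stable, they likewise admit $V_{\b z}$ with $c_1\|\b z\|^2\le V_{\b z}(\b z)\le c_2\|\b z\|^2$ and $V_{\b z}(\b\Omega(\b\psit(\b z_k),\b z_k))-V_{\b z}(\b z_k)\le-\alpha_z V_{\b z}(\b z_k)$; I would additionally take $V_{\b z}$ smooth with the converse-theorem gradient bound $\|\nabla V_{\b z}(\b z)\|\le\kappa_3\|\b z\|$.

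Next I would propagate $V_{\b z}$ along the \emph{true} dynamics $\b z_{k+1}=\b\Omega(\b\psit(\b z_k)+\b e_k,\b z_k)$, which is the zero-dynamics update perturbed by $\b e_k$ in the first argument. Local Lipschitzness of $\b\Omega$ in $\b\eta$ together with the gradient bound on $V_{\b z}$ gives, on the working compact set, $V_{\b z}(\b z_{k+1})\le(1-\alpha_z)V_{\b z}(\b z_k)+a\,\|\b z_k\|\,\|\b e_k\|+b\,\|\b e_k\|^2$, with $a,b$ built from the Lipschitz constants and the one-step bound $\|\b\Omega(\b\psit(\b z_k),\b z_k)\|\le\rho\|\b z_k\|$ implied by the quadratic bounds on $V_{\b z}$. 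Setting $V_c\triangleq V_{\b z}+\sigma W$ and adding the two dissipation estimates, Young's inequality absorbs $a\|\b z_k\|\|\b e_k\|$ into $\tfrac{\alpha_z}{2}V_{\b z}(\b z_k)$ plus a multiple of $\|\b e_k\|^2$, yielding $\Delta V_c\le-\tfrac{\alpha_z}{2}V_{\b z}(\b z_k)-(\sigma\alpha_e\kappa_1-C)\|\b e_k\|^2$ with $C$ independent of $\sigma$; taking $\sigma$ large makes the bracket at least $\tfrac12\sigma\alpha_e\kappa_1$, hence $\Delta V_c\le-\gamma V_c$ with $\gamma=\min\{\alpha_z/2,\ \alpha_e\kappa_1/(2\kappa_2)\}>0$. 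Since $V_c$ also inherits quadratic upper and lower bounds, it is a discrete exponential Lyapunov function for the composite system in the sense of \cref{def:disc_lyap}, so $(\b e_k,\b z_k)\to\b 0$ exponentially; as $\b\psit$ is locally Lipschitz with $\b\psit(\b 0)=\b 0$ this transfers to $(\b\eta_k,\b z_k)$, and as $\b\Phi$ is a (locally bi-Lipschitz) diffeomorphism, to $\b x_k$.

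The step I expect to be the main obstacle is the cross-coupling bookkeeping. One must quantify how far the transient error $\b e_k$ pushes the true $\b z$-trajectory off the zero dynamics, which leans on the gradient regularity of the converse function $V_{\b z}$ and on Lipschitzness of $\b\Omega$, and one must check that a single weight $\sigma$ (with $\varepsilon$ small, inherited from \cref{thm:stabilize_manifold}) dominates the coupling. Compounding this, the constants $a,b,C$ and the Lipschitz constants are only valid on a compact set, while the sublevel set $\{V_c\le r\}$ defining that set depends on $\sigma$; the argument must therefore be ordered carefully --- fix a reference ball, extract all Lipschitz constants there, then choose $\sigma$ large enough both to dominate $C$ and to keep $\{V_c\le r\}$ inside the reference ball, and only then conclude on that forward-invariant sublevel set --- or, alternatively, state the result under a global Lipschitz hypothesis, in which case the issue is moot.
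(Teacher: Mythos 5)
Your proposal follows essentially the same route as the paper's proof: both form the weighted composite Lyapunov function $\sigma V_{\b e}+V_{\b z}$, split the $\b z$-update into the nominal zero-dynamics decrease plus a perturbation term controlled via a Lipschitz-type bound on $V_{\b z}$ and on $\b\Omega$ in its first argument, and then choose $\sigma$ large enough to dominate the cross-coupling (you via Young's inequality, the paper via positive definiteness of the resulting $2\times 2$ quadratic form, which is the same condition). Your added care about restricting to a forward-invariant compact set where the Lipschitz constants are valid is a point the paper glosses over, but it does not change the argument.
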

\begin{proof}
    Define $\b e_k = \b \eta_k - \b \psit(\b z_k)$. By Lemma 1, there exists a continuous-time controller $\b k \in \mathcal{K}$ rendering the discrete error dynamics exponentially stable. As such, converse Lyapunov theory guarantees the existence of a Lyapunov function $V_{\b e}:\mathcal{E}\to\R$ satisfying:
    \begin{align*}
        k_1 \|\b e_k\|^2 \leq V_{\b e}(\b e_k) &\leq k_2 \|\b e_k \|^2 \\
        \Delta V_{\b e}(\b e_k) &\leq -k_3 \| \b e_k\|^2
    \end{align*}
    Similarly, the stability of $\mathcal{M}_{\b\psi}$ implies the existence of a Lyapunov function $V_{\b z}:\zeroSpace\to \R$ satisfying:
    \begin{align*}
        k_4 \|\b z_k\|^2 \leq V_{\b z}(\b z_k) &\leq k_5 \|\b z_k\|^2 \\
        \Delta V_{\b z}(\b z_k) = V_{\b z}(\b\Omega(\b\psit(\b z_k), \b z_k)) - V_{\b z}(\b z_k) &\leq -k_6 \| \b z_k\|^2
    \end{align*}
    The Lyapunov function $V_{\b z}$ will additionally satisfy \cite{ames2017hybrid}:
    \begin{align*}
        |V_{\b z}(\b z) - V_{\b z}(\b z')| & \leq k_7\|\b z - \b z'\|\left(\|\b z\| - \|\b z'\|\right) \triangleq \Gamma(\b z, \b z').
    \end{align*}
    \begin{figure}
    \vspace{2.5 mm}
    \centering
    \includegraphics[width=\columnwidth]{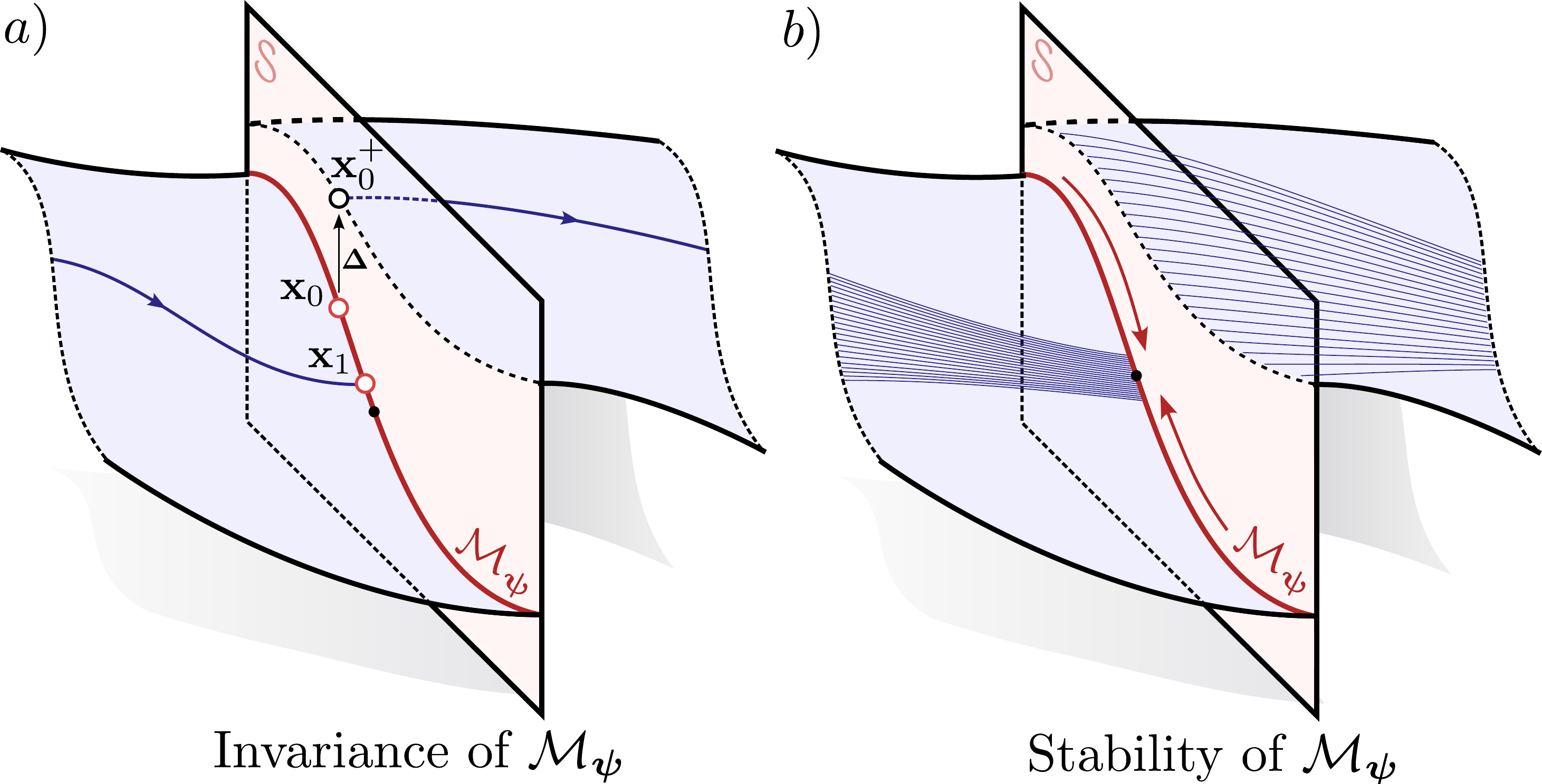}
    \caption{A depiction of the two necessary properties of $\mathcal{M}_{\b \psi}$: a) invariance under the discrete map $\b F$, and b) stability.}
    \label{fig:theory}
    \vspace{-4mm}
\end{figure}
    \noindent Consider the composite Lyapunov function candidate $V(\b e_k, \b z_k) \triangleq \sigma V_{\b e}(\b e_k) + V_{\b z}(\b z_k)$ with $\sigma>0$, whereby:
    \begin{align*}
        \min\{\sigma k_1, k_4\}\|\b e, \b z\|^2 \leq V(\b e, \b z) \leq \max\{\sigma k_2, k_5\} \|\b e, \b z\|^2.
    \end{align*}
    Furthermore, since $\b z_k$ is exponentially stable on $\mathcal{M}_{\b \psi}$, discrete sequences on $\mathcal{M}_{\b \psi}$ will be exponentially decreasing:
    \begin{align*}
        \|\b z_{k+1}\| = \|\b\Omega(\b \psit(\b z_k), \b z_k) \| \leq M\lambda \|\b z_k\|,
    \end{align*}
    for $\lambda \in [0,1)$ and $M > 0$. Compute the difference of $\Delta V$:
    \begin{align*}
        \Delta V &= \sigma \Delta V_{\b e}(\b e_k) + V_{\b z}(\b \Omega(\b \eta, \b z_k)) - V_{\b z}(\b z_k) \\
        &= \sigma \Delta V_{\b e}(\b e_k) + \Delta V_{\b z}(\b z_k)\\&\quad\quad+ V_{\b z}(\b \Omega(\b \eta_k, \b z_k)) - V_{\b z}(\b \Omega(\b \psit(\b z_k), \b z_k)) \\
        &\leq -\sigma k_1 \|\b e_k\|^2 - k_6 \|\b z_k\|^2 \\&\quad\quad+ \Gamma(\b \Omega(\b \eta_k, \b z_k), \b \Omega(\b \psit(\b z_k), \b z_k)) \\
        &= -\sigma k_1 \|\b e_k\|^2 - k_6 \|\b z_k\|^2 \\&\quad\quad +k_7L_{\b \Omega}^2 \|\b e_k \|^2 + 2M\lambda k_7L_{\b \Omega} \|\b e_k\|\|\b z_k\|\\
        &= -\begin{bmatrix}
            \|\b e_k\| \\ \|\b z_k\|
        \end{bmatrix}^\top \begin{bmatrix}
            \frac{\sigma k_1}{2} - c(\sigma) & -M\lambda k_7 L_{\b \Omega} \\ -M\lambda k_7 L_{\b \Omega} & k_6
        \end{bmatrix}\begin{bmatrix}
            \|\b e_k\| \\ \|\b z_k\|
        \end{bmatrix} 
    \end{align*}
    where $c(\sigma) =  k_7L_{\b \Omega}^2 - \frac{\sigma}{2}k_1$, and $\Gamma(\b \Omega(\b \eta, \b z), \b \Omega(\b \psit(\b z), \b z))$ is bounded using Lipschitz properties of the dynamics. Choosing $\sigma > \max\left\{\frac{2M^2\lambda^2k_7^2 L_{\b \Omega}^2}{k_1k_6}, \frac{2k_7 L_{\b \Omega}^2}{k_1}\right\}$ ensures the matrix is positive definite; therefore, $V$ is a Lyapunov function certifying composite stability.
    \qed
\end{proof}

\begin{remark}
    Figure \ref{fig:theory} depicts each of the assumptions used to prove stability in \cref{thm:composite_stability}, namely discrete invariance and exponential stability of $\mathcal{M}_{\b \psi}$. Subsequent sections will develop constructive techniques leveraging optimal control and learning for finding such manifolds. 
\end{remark}

\subsection{Stability via Optimal Control}

We will leverage optimality to enforce the stability on $\mathcal{M}_{\b\psi}$. This choice is motivated by the fact that asymptotic stability is a necessary condition for an optimal controller to be well defined \cite{liberzon_calculus_2012}. As \cref{thm:composite_stability} rests on assumptions of exponential stability, we define conditions under which optimality implies exponential stability:

\begin{theorem}
    \label{thm:exp_stab_value}
   \textit{ Let $V(\b x_k)$ be the value function for the optimal control problem defined in \cref{eqn:ftocp}, where the cost function is quadratic, $c(\b x_k, \b v_k) = \b x_k^\top \b Q \b x_k + \b v_k^\top \b R \b v_k$, and the domain $\mathcal{X}$ is compact. If there exists an $\varepsilon > 0$ such that the LQR approximation of \cref{eqn:ftocp} taken by linearizing the dynamics around the equilibrium point satisfies: }
    \begin{align}
        \b v_{LQR}(\b x_k) = -\b K \b x_k \in \inputSetk \quad \forall \b x_k \in B_\varepsilon(\b 0), \label{eqn:lqr}
    \end{align} 
    \textit{with $\mathcal{H}(\b x_k) \triangleq \{\b v_k \in P~|~\b h(\b x_k, \b v_k)\le 0\}$, then the nonlinear system is exponentially stable under the optimal controller.}
\end{theorem}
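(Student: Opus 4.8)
The plan is to show that the optimal value function $V$ is, on a neighborhood of the equilibrium, a discrete exponential Lyapunov function in the sense of \cref{def:disc_lyap}; exponential stability of the nonlinear closed loop then follows from the equivalence stated immediately after that definition. Two of the three required properties come essentially for free from dynamic programming. For the lower bound, positive-definiteness of the stage cost gives $V(\b x_k) = \sum_{j\ge k} c(\b x_j^*, \b v_j^*) \ge c(\b x_k, \b v_k^*) \ge \b x_k^\top \b Q \b x_k \ge \lambda_{\min}(\b Q)\|\b x_k\|^2$, so one may take $k_1 = \lambda_{\min}(\b Q) > 0$. For the decrease condition, the Bellman relation $V(\b x_k) = c(\b x_k, \b v_k^*) + V(\b x_{k+1})$ yields $\Delta V(\b x_k) = -c(\b x_k, \b v_k^*) \le -\b x_k^\top\b Q\b x_k \le -\lambda_{\min}(\b Q)\|\b x_k\|^2$; once the quadratic upper bound below is in hand this becomes $\Delta V(\b x_k) \le -(\lambda_{\min}(\b Q)/k_2)\,V(\b x_k)$ with $\alpha := \lambda_{\min}(\b Q)/k_2 \in (0,1]$, the bound $\alpha\le 1$ holding automatically since $k_1\le k_2$.

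The crux is the quadratic upper bound $V(\b x_k)\le k_2\|\b x_k\|^2$ near the origin, and this is where hypothesis \cref{eqn:lqr} enters. Let $\b A,\b B$ denote the linearization of $\b F$ at the equilibrium and $\b P\succ 0$ the stabilizing solution of the associated discrete algebraic Riccati equation, so that $\b x^\top\b P\b x$ is the LQR cost-to-go and $\b A - \b B\b K$ is Schur. The idea is to use the feasible (generally suboptimal) policy $\b v_k = -\b K\b x_k$ as a certificate: by optimality, $V(\b x_k) \le \sum_{j\ge k} c(\b x_j, -\b K\b x_j)$ evaluated along the closed-loop nonlinear trajectory, provided that trajectory stays feasible. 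Using $\b x^\top\b P\b x$ as a local Lyapunov function together with a second-order remainder estimate $\|\b F(\b x,-\b K\b x) - (\b A-\b B\b K)\b x\|\le L\|\b x\|^2$ — uniform on a neighborhood by $C^2$-smoothness and compactness of $\mathcal{X}$ — one exhibits $\delta \in (0,\varepsilon]$ such that the sublevel set $\{\b x : \b x^\top\b P\b x \le \delta\}$ is forward invariant under the nonlinear LQR loop, is contained in $B_\varepsilon(\b 0)$ so that $-\b K\b x_j \in \mathcal{H}(\b x_j)$ (feasibility), and on which the states decay geometrically, $\|\b x_j\| \le M\rho^{\,j-k}\|\b x_k\|$ with $\rho\in[0,1)$. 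Since $c(\b x_j,-\b K\b x_j)\le (\|\b Q\| + \|\b K\|^2\|\b R\|)\|\b x_j\|^2$, summing the geometric series gives $V(\b x_k)\le \frac{M^2(\|\b Q\|+\|\b K\|^2\|\b R\|)}{1-\rho^2}\|\b x_k\|^2 =: k_2\|\b x_k\|^2$ for all $\b x_k$ in this neighborhood.

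Assembling the three properties, $V$ is a discrete exponential Lyapunov function on $\{\b x : \b x^\top\b P\b x\le\delta\}$, which yields $\|\b x_k\|\le M'\beta^k\|\b x_0\|$ for initial conditions in that set — the asserted (local) exponential stability of the nonlinear system under the optimal controller. The main obstacle I expect is this upper-bound step: rigorously choosing the invariant neighborhood so that (i) the LQR rollout on the true dynamics never violates $\b h \le 0$, which requires the $\b x^\top\b P\b x$ sublevel set to lie inside $B_\varepsilon(\b 0)$ and remain forward invariant despite the nonlinear remainder, and (ii) the accumulated stage cost stays $O(\|\b x_k\|^2)$, which relies on the geometric decay rate surviving the nonlinear perturbation. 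Compactness of $\mathcal{X}$ is used precisely to obtain uniform Lipschitz/Taylor constants; one should also note (or assume) stabilizability of $(\b A,\b B)$ and detectability of $(\b A,\b Q^{1/2})$ so that $\b P$ and $\b K$ exist in the first place, which is implicit in \cref{eqn:lqr}. It is worth remarking that the conclusion is local — a region of attraction containing $\{\b x : \b x^\top\b P\b x\le\delta\}$ — since feasibility of $\b v_{LQR}$ is assumed only on $B_\varepsilon(\b 0)$.
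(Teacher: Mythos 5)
Your core argument is the same mechanism the paper uses, and in the local regime it is organized more cleanly: the lower bound $V(\b x_k)\ge \lambda_{\min}(\b Q)\|\b x_k\|^2$ from the first stage cost, the decrease $\Delta V(\b x_k)=-c(\b x_k,\b v_k^*)\le-\lambda_{\min}(\b Q)\|\b x_k\|^2$ from the Bellman relation, and the quadratic upper bound obtained by rolling out the (feasible, suboptimal) LQR policy on the true nonlinear dynamics inside an invariant sublevel set of $\b x^\top\b P\b x$ contained in $B_\varepsilon(\b 0)$. The paper does exactly this for the upper bound and the decrease, but routes the rest through an awkward case analysis on whether a candidate exponential envelope is violated finitely or infinitely often; your direct appeal to the quadratic sandwich plus \cref{def:disc_lyap} is the more standard and, frankly, tidier way to reach the same local conclusion. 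Your bookkeeping ($\alpha=\lambda_{\min}(\b Q)/k_2\le 1$ because $k_1\le k_2$, compactness supplying uniform Taylor remainder constants) is sound.

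The genuine gap is your final remark that the conclusion is only local. Feasibility of $\b v_{LQR}$ on $B_\varepsilon(\b 0)$ is needed only to certify the quadratic upper bound on $V$ near the origin; it does not restrict the domain on which the \emph{optimal} controller is defined and feasible, since that controller solves the constrained problem \cref{eqn:ftocp} everywhere on $\mathcal{X}$. The compactness hypothesis is in the theorem precisely to upgrade your local result to all of $\mathcal{X}$: because $V\succ 0$ and $\Delta V(\b x_k)\le-\lambda_{\min}(\b Q)\|\b x_k\|^2$ hold globally, the number of steps any optimal trajectory can spend outside $B_\delta(\b 0)$ is bounded by $K\le \sup_{\b x\in\mathcal{X}}V(\b x)/(\lambda_{\min}(\b Q)\delta^2)$, which is finite by compactness; trajectories are also uniformly bounded on $\mathcal{X}$. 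Concatenating a uniformly bounded prefix of length at most $K$ with the exponential tail once the trajectory enters $B_\delta(\b 0)$, and absorbing the prefix into the overshoot constant (using $\|\b x_0\|\ge\delta$ for initial conditions outside the ball), yields a single exponential bound valid for every $\b x_0\in\mathcal{X}$. Without this step you have proved a strictly weaker statement than the theorem asserts, and the assertion that locality is forced by the hypotheses is incorrect.
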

\begin{proof}
    We begin by showing the optimal controller \cref{eqn:ftocp} is exponentially stabilizing in a neighborhood of the origin. Then, we extend this claim to the entire state space.
    In a sufficiently small ball around the origin, LQR \eqref{eqn:lqr} will be exponentially stabilizing for the nonlinear system \cite{sastry_linearization_1999}, as it locally satisfies input bounds. This implies constants $M_{\textrm{LQR}}, \delta>0$ and $\lambda_{\textrm{LQR}} \in [0,1)$ such that:
    \begin{align*}
        \|\b x_k\| \leq M_{\textrm{LQR}}\lambda_{\textrm{LQR}}^k \|\b x_0\| \quad \forall \b x_0 \in B_\delta(\b 0), \ \ \forall k \in \mathbb{Z}_+.
    \end{align*}
    We first show that the optimal trajectory emanating from an initial condition $\b x_0 \in B_\delta(\b 0)$ is similarly exponentially stable. For any $M > 0, \ \lambda \in (0, 1)$, consider two cases:
    
    \proofsubstep{Case 1:} There exists a finite index set $\{k_i\}_{i=0}^N$ satisfying:
    \begin{align*}
        \|\b x_{k_i}\| \geq M\lambda^{k_i} \|\b x_{0}\|.
    \end{align*}
    Compute the maximum violation ratio $R\geq1$ given by:
    \begin{align*}
        R \triangleq \max_{i \in \{0, \hdots, N\}} \frac{\|\b x_{k_i}\|}{M\lambda^{k_i} \|\b x_0\|}.
    \end{align*}
    If the index set is empty, take $R = 1$. Then
    \begin{align*}
        \|\b x_k\| \leq R M\lambda^{k} \|\b x_0\| \quad \forall k \in \mathbb{Z}_+
    \end{align*}
    And the trajectory is exponentially stable. 

    \proofsubstep{Case 2:} There exists a infinite index set $\{k_j\}_{j=0}^\infty$ satisfying:
    \begin{align} \label{eqn:inf_violation}
        \|\b x_{k_j}\| \geq M\lambda^{k_j} \|\b x_0\|.
    \end{align}
   We will establish that $V(\b x_k)$ is an exponential Lyapunov function (\cref{def:disc_lyap}) along the trajectory, and thus the trajectory is exponentially stable. First, we bound the value function difference:
    \begin{align}
        \Delta V(\b x_k) &= V(\b x_{k}) - V(\b x_{k-1})\nonumber = -\b x_k^\top \b Q \b x_k - \b v_k^\top \b R \b v_k \nonumber\\
        &\leq -\lambda_{\min}(\b Q) \|\b x_k\|^2 \label{eqn:deltaV_bound}
    \end{align}
    Next, we need to show that $V(\b x_k)$ is bounded by quadratics. 
    Because the LQR controller is suboptimal for the nonlinear system, applying it increases the cost relative to $V(\b x_k)$:
    \begin{align*}
        V(\b x_0) &\leq \sum_{k=0}^\infty \b x_k^\top \b Q \b x_k + (\b K\b x_k)^\top \b R (\b K\b x_k) \\
        &\leq \sum_{k=0}^\infty \left(\bar{\lambda}(\b Q) + \bar{\lambda}(\b K^\top \b R \b K)\right) \|\b x_k\|^2 \\
        &\leq \sum_{k=0}^\infty \left(\bar{\lambda}(\b Q) + \bar{\lambda}(\b K^\top \b R \b K)\right) M_{\textrm{LQR}}^2 \lambda_{\textrm{LQR}}^{2k}\|\b x_0\|^2 \\
        &=  \frac{M_{\textrm{LQR}}^2}{1-\lambda_{\textrm{LQR}}^2}\left(\bar{\lambda}(\b Q) + \bar{\lambda}(\b K^\top \b R \b K)\right) \|\b x_0\|^2
    \end{align*}
    where $\underline\lambda$ and $\overline\lambda$ are the minimum and maximum eigenvalue oeprators, respectively.
    
    Finally, using \eqref{eqn:inf_violation}, we can lower bound $V(\b x_k)$ by:
    \begin{align*}
        V(\b x_0) &= \sum_{j=0}^\infty \b x_{k_j}^\top \b Q \b x_{k_j} + \b v_{k_j}^\top \b R \b v_{k_j} \\
        &\geq \sum_{j=0}^\infty \underline{\lambda}(\b Q) \|\b x_{k_j}\|^2 \\
        &\geq \sum_{j=0}^\infty \underline{\lambda}(\b Q) M^2\lambda^{2k_j} \|\b x_0\|^2 \\
        &= \left[\frac{M^2}{1-\lambda^2}\left(\bar{\lambda}(\b Q) + \bar{\lambda}(\b K^\top \b R \b K)\right) - c\right] \|\b x_k\|^2
    \end{align*}
    Where $c$ is the sum of the terms removed from the geometric series. Lastly, 
    The above bounds hold for each point on the trajectory; therefore, $V$ is a Lyapunov function certifying exponential stability of the trajectory. 

    Finally, we extend the claim outside of the ball around the origin. As $V \succ 0$ and $\Delta{V} \prec 0$, the optimal controller is asymptotically stable \cite{liberzon_calculus_2012}. By compactness of $\mathcal{X}$ and \eqref{eqn:deltaV_bound}, the time to enter $B_\delta(\b 0)$ is bounded by:
    \begin{align*}
        {K} \triangleq \frac{\sup_{\b x_0 \in \mathcal{X}} V(\b x_0)}{\inf_{\b x_0 \in \mathcal{X}\backslash B_\delta(\b 0)} \Delta V(\b x_0)} \leq \frac{\sup_{\b x_0 \in \mathcal{X}} V(\b x_0)}{\underline\lambda(\b Q) \delta^2}.
    \end{align*}
    Because trajectories converge exponentially in $B_\delta(\b 0)$,
    \begin{align*}
         \|\b x_k\| \leq M\lambda^{k-{K}} \|\b x_{{K}}\| \quad \forall \b x_0 \in B_\delta(\b 0), \ \ \forall k \geq {K}
    \end{align*}
    for $M>0, \ \lambda \in [0, 1)$. By compactness of $\mathcal{X}$, trajectories are uniformly bounded $\|\b x_k\| \leq B$; therefore:
    \begin{align*}
        \|\b x_k\| \leq \frac{\max\{B, M\}\lambda^{-K}}{\min\{1, \delta\}}\lambda^k \|\b x_0\|\quad \forall k \in \mathbb{Z}_+
    \end{align*}
    is an exponential upper bound for the entire trajectory.
    \qed
\end{proof}

\subsection{Constructing the Zeroing Manifold via Learning}
By \Cref{thm:exp_stab_value}, a manifold which is invariant under the optimal controller will be exponentially stable. Such a manifold then satisfies the assumptions of \cref{thm:composite_stability} and can be constructively stabilized resulting in composite stability of the entire system.

We will now present a learning method which leverages optimal control to ensure the assumptions of controlled invariance and stability of $\mathcal{M}_{\b\psi}$ as depicted in Figure~\ref{fig:theory} are met.
Specifically, we will search for a manifold that is invariant under the optimal action, i.e. the controller that keeps sequences of states in the manifold coincides with the optimal controller for \cref{eqn:ftocp}.

To concisely define the loss function consider the variable 
\begin{align}
    \b \zeta_{\b \theta}(\b z) \triangleq \begin{bmatrix} \b \psit(\b z) \\ \b z\end{bmatrix}
\end{align}
which encodes a point on the manifold. The loss function is:
\begin{align} \label{eqn:loss}
    \mathcal{L}(\b\theta) = \mathop{\mathbb{E}}_{\b z \sim \text{UNIFORM}} \left\lVert \b\eta_{1}^*\left(\b \zeta_{\b \theta}(\b z)\right) - \b\psit\left(\b z^*_{1}\left(\b \zeta_{\b \theta}(\b z)\right)\right)\right\rVert_2^2,
\end{align}
where $\b z_{1}^* = \b \Omega(\b \psi(\b z), \b z)$ and $\b \eta_{1}^* = \hat{\b F}(\b \psi(\b z), \b z, \b v^*)$, with $\b v^*$ the optimal control input. The expectation is taken over a uniform distribution over $\zeroSpace$. The loss function directly measures how far an initial condition on the manifold deviates from the manifold under one discrete step of the optimal controller as depicted in \cref{fig:learning}. 

The learning pipeline outlined in Algorithm \ref{alg:train} starts an epoch by sampling a batch of points from $\zeroSpace$, therefore enabling a dimension reduction as compared to the complete state space. The network is then evaluated to produce a set of points on the current manifold, $\{\b\zeta_{\b\theta}(\b z_i)\}_{i=1}^{N}$. We then approximately solve the optimal control problem \cref{eqn:ftocp}. Finally, we simulate the system forwards one step to obtain $\left(\b \eta^*_{1}, \b z^*_{1}\right)$ which the loss computation in \cref{eqn:loss} requires. If $\b \psit$ attains zero loss, because of continuity of the network and the loss function we can conclude that the resulting manifold $\mathcal{M}_{\b \psi}$ is invariant under the optimal control and can render the full order system stable by satisfaction of the preconditions for \cref{thm:composite_stability}.

\begin{figure}[t!]
    \centering
    \vspace{5pt}
    \includegraphics[width=\columnwidth]{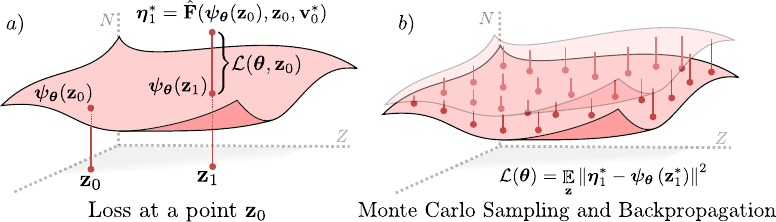}
    \caption{a) The loss function exactly measures the extent to which the manifold is not invariant under optimal action b) a Monte Carlo approximation of the spatial loss is used, wherein the optimal policy is backpropogated through to update the surface.
    }
    \label{fig:learning}
    \vspace{-4mm}
\end{figure}


\section{Application of ZDP to ARCHER}
We deployed the ZDP method on the 3D hopping robot ARCHER. To discuss the application of ZDPs to ARHCER, consider the pose of the robot $\b q = (\b p, q)\in\mathcal{Q}$ where $\b p\in\R^3$ represents the global position in world frame and $q\in \mathbb{S}^3$ the robot's orientation quaternion. Taking the velocities to be $\b v = (\dot{\b p}, \b\omega) \in T_\b q\mathcal{Q}$ for $\dot{\b p}\in\R^3$ the global linear velocity and $\b \omega \in \mathfrak{s}^3$ the body frame angular rates, we can represent the full state as $\b x = (\b q, \b v)\in \mathcal{X}\triangleq T\mathcal{Q}$. 

ARCHER evolves under hybrid dynamics. As such, its flight and ground phase dynamics are governed by \eqref{eq:robot_dynamics} and it has two impact maps of the form \eqref{eqn:disc} (one for the ground to flight transition, and another for flight to ground). We treat the vertical hopping as an autonomous system, and we will focus our attention on how to stabilize the position of the robot via orientation. The flight dynamics can be decomposed into actuated states, i.e. the orientation coordinates, and unactuated states, i.e. position coordinates:
\begin{align*}
    \b \eta = \begin{bmatrix}
    q \\ \b \omega
\end{bmatrix}, \quad \b z = \begin{bmatrix}
    \b p \\ \b \dot{\b p}
\end{bmatrix}.
\end{align*}
Take $(\b \eta_k, \b z_k)$ to be a preimpact state. The ground phase does not depend on the control input, and the continuous-time evolution of the $\b z$ coordinates has an extremely weak dependence on the discrete-time control input $\b v_k$. 
We can assume $\b \Omega$ is independent from $\b v_k$ because the effect of different control inputs on impact time is negligible.

\subsection{Online Control Implementation}
Given a function $\b \psit$, the controller aims to stabilize its associated zeroing manifold $\mathcal{M}_{\b \psi}$. Consider a state $(\b \eta(t), \b z(t))$ during the flight phase. We set the desired orientation to $\b \eta_d(t) = \b \psit(\b z(t))$, and update this continuously throughout the flight phase. The desired set point is converted to a quaternion, $q_d$, which we stabilize using the following quaternion PD controller in the flight phase:
\begin{align*}
    \b u = -\b K_p\text{log}(q_d^{-1}q) - \b K_d \omega,
\end{align*}
for suitable gains $\b K_p, \b K_d$. This controller is applied at 1kHz. 

One key addition to the controller as compared to previous work \cite{csomay-shanklin_nonlinear_2023} is the application of flywheel spindown in the ground phase. When the robot is in contact with the floor, the following control action is applied:
\begin{align*}
    \b u = -\gamma\dot{\b \vartheta},
\end{align*}
where $\dot{\b \vartheta}\in \R^3$ represents the flywheel speed. This allows the system to maintain lower flywheel speeds and mitigates the problem of speed-torque constraints. This ground phase controller preserves the theoretical assumptions since the ground phase control is independent of output of the policy.

{\setlength{\textfloatsep}{-0pt}
\begin{figure}
\end{figure}
\begin{algorithm}[t!]
\caption{Monte Carlo Zero Dynamics Policy Training}
\label{algo:motGraph}
\begin{small}
\begin{algorithmic}[1]
\State \textbf{hyperparameters:} $(\Xi, \rho, \Upsilon)$
\State Number of MC samples, Learning Rate and Number of Steps
\State \textbf{Initialize} $\b \theta$ \Comment{Pretrained with reasonable policy}
\For{$i = 1:\Upsilon$}
\State $\b z \sim \text{UNIFORM}(\underline{\b z}, \overline{\b z})$
\State $\b \zeta_{\b \theta} \gets \begin{bmatrix}
    \b \psit(\b z) \\ \b z
\end{bmatrix}$
\State$\b x_0 \gets \b \Phi^{-1}(\b \zeta_{\b \theta})$
\State $\b x_{1:T}^*, \b v_{1:T}^* \gets \text{iLQR}(\b x_0)$
\State $\begin{bmatrix}
    \b\eta_{1}^*\left(\b \zeta_{\b \theta}(\b z)\right) \\
    \b z^*_{1}\left(\b \zeta_{\b \theta}(\b z)\right)
\end{bmatrix} \gets \b \Phi(\b x_1)$
\State $\b \theta_{i+1} \gets \b \theta_i - \rho \nabla_{\b \theta} \sum_{\b z}  \left\lVert \b\eta_{1}^*\left(\b \zeta_{\b \theta}(\b z)\right) - \b\psit\left(\b z^*_{1}\left(\b \zeta_{\b \theta}(\b z)\right)\right)\right\rVert^2_2 $
\EndFor
\State \Return $\b \theta$
\end{algorithmic}
\end{small}
 \label{alg:train}
\end{algorithm}
}

There are a few implementation differences from our theoretical implementation. The controller used in the proof of \cref{thm:stabilize_manifold} differs from ours by (1) predicting the preimpact state $\b z_{k+1}$, (2) tracking a trajectory $\b \eta_d(t)$ defined by a bezier polynomial, and (3), using a RES-CLF. Empirically, a well tuned PD controller was sufficient to stabilize the continuous time system, and the feedforward input tracking that a trajectory would provide was not necessary.

\begin{figure}[t!]
\vspace{2.5 mm}
    \centering
    \includegraphics[width=\columnwidth]{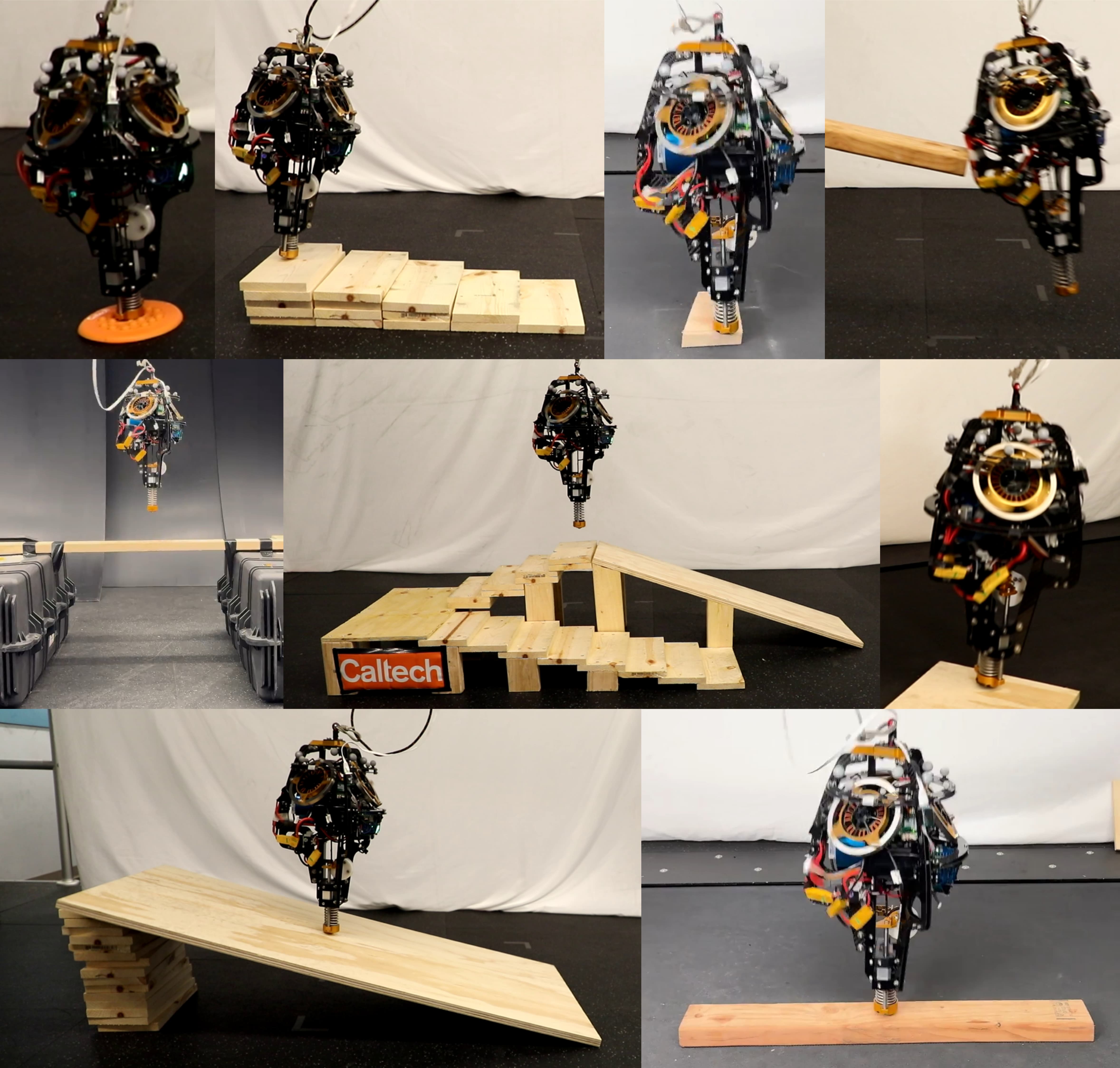}
    \caption{A snapshot of the experiments conducted with ARCHER, including set point tracking, disturbance rejection, and hopping over rough terrain.}
    \label{fig:experiment_collage}
    \vspace{-5.5 mm}
\end{figure}

\subsection{ZDP Optimization and Learning Details}

Notice that for discrete-time systems, \cref{eqn:ftocp} is a nonlinear program even if the value function is available. To solve this optimal control problem, we employ Iterative LQR (iLQR), subject to box input constraints \cite{tassa2014control}. The iLQR problem is solved in the $\b x$ variable, so the initial condition is obtain via $\b x = \b \Phi^{-1}(\b \eta, \b z)$. 
We implemented \cref{algo:motGraph} in the JAX \cite{jax2018github} and used a Network of 2 Layers with 256 hidden units each using ReLu activations. In our implementation of iLQR, we assume that the low-level controller has perfect tracking and exactly achieves the desired angle with zero angular velocity. This considerably simplifies the flight dynamics and therefore the trajectory optimization, allowing them to be solved for in closed form. The input bounds $\inputSetk$ were chosen such that the torque applied during flight is bounded by the difference between the post-impact state and the desired preimpact state.  We require gradients of the optimal control, $\frac{d\b v}{d \b x}$, as presented in \cite{amos2018differentiable} -- note that if no constraints are active, then this gradient is exactly the feedback matrix $\b K = \b Q_{\b v\b v}^{-1} \b Q_{\b v\b x}$ from the iLQR algorithm.

iLQR requires a stabilizing initial guess in order to converge; therefore, we use a Raibert heuristic for the first rollout. To eliminate this dependence, other optimal control methods could be used, for instance SQP. The authors experienced difficulty with the speed and accuracy of large-scale QP solvers in JAX and leveraged the fact that iLQR solves many small QPs for speed and stability. Additionally, for computational efficiency, we limit the number of iLQR iterations to five (empirically enough to obtain convergence for this system). The full code base for this project can be found at \cite{code}.

\section{Results and Limitations}
\subsection{Hardware Results}
A collection of the experiments conducted on ARCHER can be seen in Figure \ref{fig:experiment_collage}. The ARCHER hardware platform \cite{ambrose_creating_2022} consists of three KV115 T-Motors with 250 g flywheel masses attached for orientation control, and one U10-plus T-Motor attached to a 3-1 gear reduction to the foot via a cable and pulley system. The robot is powered by two 6 cell LiPo betteries connected in series, which can supply up to 50.8 V at over 100 A of current to the four ELMO Gold Solo Twitter motor controllers.
The policy $\b \psit$ was exported from JAX to an ONNX file, which is evaluated at 1kHz on an Ubuntu 20.04 machine with AMD Ryzen 5950x @ 3.4 GHz and 64 Gb RAM and torques are passed directly to the robot over ethernet. This controller does not require this amount of compute to run, and could be feasibly implemented on an NVIDIA Jetson or comparable board. A Kalman filter with projectile dynamics is used to filter the position estimates from optritrack in the flight phase. The manif library \cite{Deray-20-JOSS} is used to compute the $\log$ map for the quaternion PD controller.

We logged over 3,000 stable hops when deploying the ZDP method on the ARCHER hardware platform, a selection of which can be seen in \cref{fig:experiment_collage} and in the supplemental video \cite{noauthor_supplemental_video_2022}. \cref{fig:lqr_comp} depicts the desired impact angle, i.e. the learned policy evaluation, and the actual impact angle over the complete collection of all hardware tests. In general, as predicted by the theory, this manifold is both invariant under the feedback controller, and stable. Also interesting to note is that around the origin, the learned policy alignes with LQR, as presented in \cref{thm:exp_stab_value}. Notably, away from the origin, the learned policy diverges from LQR in order to maintain stability under the enforced input contstraints. A comparison between the trained policy and the application of a naive LQR controller when trying to track a setpoint 2 m away is seen in the left part of \cref{fig:lqr_comp}, wherein ZDPs maintain stability by implicitly enforcing discrete invariance and optimality over a horizon.

\begin{figure}[t!]
    \centering
        \vspace{5pt}
    \includegraphics[width=\columnwidth]{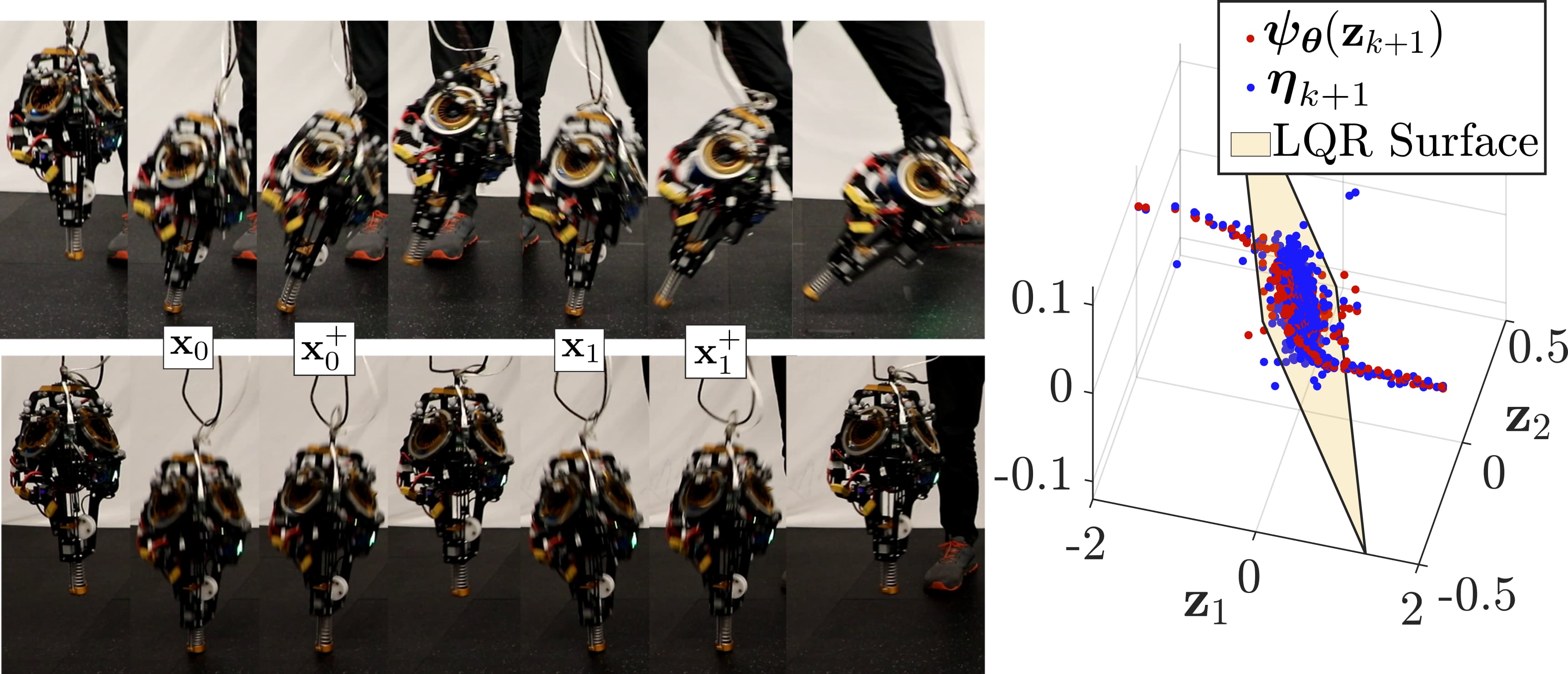}
    \caption{Left: A comparison between LQR (top) and ZDPs (bottom) while tracking a 2 m setpoint. Right: The output of the trained policy and the actual state at impact over 3000 hops, as compared to an LQR controller.}
    \label{fig:lqr_comp}
    \vspace{-5mm}
\end{figure}

The tight trajectory tracking and system behavior is seen in \cref{fig:square}, where ARCHER was asked to follow two laps of a 1 m square trajectory. As seen on the right of \cref{fig:square}, using a PD controller at the feedback level empirically resulted in the error (and therefore the torques) converging exponentially fast to a small neighborhood of zero during the flight phase. During this torque application, the flywheel speed can be seen to grow, while the ground phase controller is able to successfully regulate them close to zero. 

\begin{figure}[t!]
    \centering
        \vspace{5pt}
    \includegraphics[width=\columnwidth]{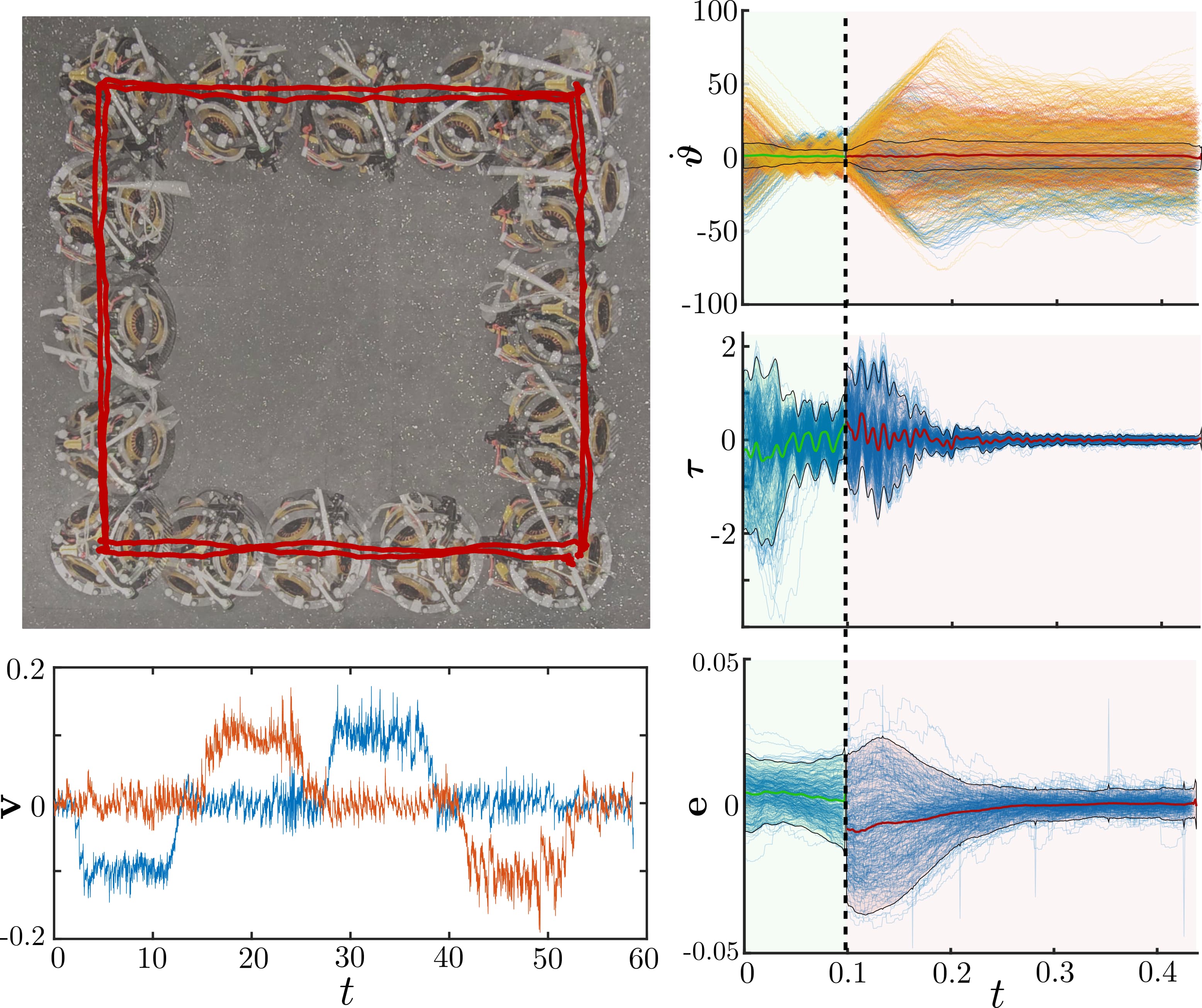}
    \caption{Square trajectory tracking. Left pane: overhead view with positional hardware data overlayed (top) and velocity tracking (bottom). Right pane: wheel velocities (top), torque (mid), and error (bottom) in the ground (green) and flight (red) phase with mean and 2$\sigma$ deviation.}
    \label{fig:square}
    \vspace{-4mm}
\end{figure}

\subsection{Limitations}
As training this policy involves querying the optimal control input and its gradients, each iteration of the training process is computationally expensive (2 seconds per iteration for a batch size of 30). The use of iLQR required a stabilizing controller to initialize the rollout and therefore can only do local improvements on a stabilizing policy. 
Furthermore, to avoid sampling initial conditions in the training pipeline which the hopper cannot stabilize, the policy $\b \psit$ was pretrained with a conservative Raibert heuristic. 

\section{Conclusion and Future Work}
We have proposed a method of synthesizing stabilizing feedback controllers for hybrid underactuated systems. By exploiting the zero dynamics decomposition, we demonstrated both theoretically and experimentally that stabilizing such systems can effectively be decomposed into designing a mapping which renders the discrete zeroing manifold invariant under optimal controllers and pairing it with a suitable tracking controller. Future work includes merging the proposed methods with RL controllers, applying to other legged systems, and developing a parallel theory for continuous time systems.

\section{Acknowledgements}
We would like to thank Murtaza Hathiyari for aiding with C++ code development and hardware experiment testing.

\bibliographystyle{IEEEtran}
\balance
\bibliography{main}
\end{document}